\newtheorem{theorem}{Theorem}
\newtheorem{proposition}{Proposition}
\newtheorem{lemma}{Lemma}
\newtheorem{proof}{Proof}
\newtheorem{definition}{Definition}
\newcommand{\method}{\textsc{PriVAE}\xspace}
\title{Differentially Private Variational Autoencoders with Term-wise Gradient Aggregation}
\author{%
  Tsubasa Takahashi \\
  LINE Corporation \\
  \texttt{tsubasa.takahashi@linecorp.com} \\
  \And
  Shun Takagi \thanks{Work done at LINE Corporation.}\\
  Kyoto University \\
  \texttt{takagi.shun.45a@st.kyoto-u.ac.jp} \\
  \AND
  Hajime Ono \footnotemark[1]\\
  SOKENDAI \\
  \texttt{hono@ism.ac.jp} \\
  \And
  Tatsuya Komatsu \\
  LINE Corporation \\
  \texttt{komatsu.tatsuya@linecorp.com} \\
}
\begin{document}

\maketitle

\begin{abstract}
This paper studies how to learn variational autoencoders with a variety of divergences under differential privacy constraints.
We often build a VAE with an appropriate prior distribution to describe the desired properties of the learned representations and introduce a divergence as a regularization term to close the representations to the prior.
Using differentially private SGD (DP-SGD), which randomizes a stochastic gradient by injecting a dedicated noise designed according to the gradient's sensitivity, we can easily build a differentially private model.
However, we reveal that attaching several divergences increase the sensitivity from $O(1)$ to $O(B)$ in terms of batch size $B$.
That results in injecting a vast amount of noise that makes it hard to learn.
To solve the above issue, we propose term-wise DP-SGD that crafts randomized gradients in two different ways tailored to the compositions of the loss terms.
The term-wise DP-SGD keeps the sensitivity at $O(1)$ even when attaching the divergence.
We can therefore reduce the amount of noise.
In our experiments, we demonstrate that our method works well with two pairs of the prior distribution and the divergence.
\end{abstract}

\section{Introduction}

Privacy-preserving data synthesis (PPDS) is a solution to sharing private data by constructing a generative model while preserving privacy.
Differential privacy (DP) \cite{dwork2006differential} is a rigorous notation of privacy to release statistics, and is used in broad domains and applications \cite{abowd2018us,amin2019differentially,bindschaedler2016synthesizing,chaudhuri2019capacity,papernot2016semi,schein2019locally,wang2019sparse}.
In recent years, several works have proposed differentially private deep generative models \cite{acs2018differentially, jordon2018pate, torkzadehmahani2019dp, xie2018differentially}.

Deep generative models have significantly improved in the past few years.
Variational autoencoder (VAE) \cite{kingma2016improved, kingma2013auto} is a likelihood-based model to reconstruct training inputs. 
VAE also enables us to generate random samples from its learned representations.
We often build a VAE with an appropriate prior distribution to describe the desired properties of the representations (such as encouraging clustering, sparsity, and disentanglement), and introduce a divergence as a regularization term to close the learned representations to the prior \cite{alemi2016deep,bengio2013representation,eastwood2018framework,esmaeili2019structured,mathieu2019disentangling}.
This paper studies how to learn variational autoencoders with a variety of divergences under differential privacy constraints.

A simple way to build a differentially private VAE is to employ differentially private stochastic gradient descent (DP-SGD) \cite{abadi2016deep} in the learning process of vanilla VAE.
The key idea of DP-SGD is that it injects noises to stochastic gradients for giving DP guarantees to the learned parameters.
The noise scale is designed according to the stochastic gradient's sensitivity, which is the maximal change of the gradient when any single input is modified.
To limit the gradient's sensitivity, DP-SGD first decomposes input samples into disjoint smaller groups (i.e., micro-batches).
Then, DP-SGD computes a stochastic gradient for each group and clips the norm of the gradient by a constant.
On the other hand, a misuse of the gradient aggregations might cause privacy leakages unconsciously.

Our contributions are three-fold. First, we reveal that several divergences might increase the stochastic gradients' sensitivity when attaching them to the loss function.
To discover the issues, we address a sensitivity study in the learning process of VAEs based on DP-SGD.
The sensitivity is increased from $O(1)$ to $O(B)$ in terms of batch size $B$ by attaching the divergence.
Consequently, the sensitivity increase degrades the quality of the learned model since it directly amplifies the amount of noise.
If unfortunately, we do not notice the sensitivity increase, we might cause an insufficient differential privacy guarantee.

Second, to solve the above issue, we propose \textit{term-wise DP-SGD} that crafts randomized gradients in two different ways tailored to the compositions of the loss terms.
The term-wise DP-SGD keeps the sensitivity at $O(1)$ even when attaching the divergence.
We can therefore build a differentially private VAE with a small amount of noise by our proposed method.

Third, based on the term-wise DP-SGD, we present \method, a general model to learn VAEs with attaching a variety of divergences while satisfying differential privacy.
Our experiments demonstrate that our proposed method works well with two pairs of the prior distribution and the divergence.

This paper clarifies how to aggregate gradients in VAEs to satisfy differential privacy while refraining the amount of noise.
Although we mainly study differentially private VAEs, these contributions also have significant importance for the other machine learning models to satisfy differential privacy.

\subsection{Related Works}

Generative models under differential privacy have been studied in a last decade.
Traditional approaches are based on capturing probabilistic models, low rank structure, and learning statistical characteristics from original sensitive database \cite{chen2015differentially,zhang2014privbayes,zhang2016privtree}.
Plausible deniability \cite{bindschaedler2017plausible} is an extended privacy metric behind DP for building a generative model.

We have several studies about DP-SGD \cite{mcmahan2017learning, mcmahan2018general, yu2019differentially}.
Lee et al. \cite{lee2018concentrated} demonstrated that DP-SGD can be improved with adaptive step sizes and careful allocation of privacy budgets between iterations.
Bagdasaryan et al. \cite{bagdasaryan2019differential} revealed that if the original model is unfair, the unfairness becomes worse once DP is applied.

\section{Preliminaries}\label{sec:prelim}

\subsection{Differential Privacy}

Differential privacy \cite{dwork2006differential, dwork2011differential, dwork2011firm} is a rigorous mathematical privacy definition, which quantitatively evaluates the degree of privacy protection when we publish statistical outputs.
The definition of differential privacy is as follows:

\begin{definition}[($\varepsilon, \delta$)-differential privacy]
A randomized mechanism $\mathcal{M}:\mathcal{D}\rightarrow\mathcal{Z}$ satisfies ($\varepsilon, \delta$)-differential privacy if, for any two neighboring input $D, D' \in \mathcal{D}$ and any subset of outputs $Z \subseteq \mathcal{Z}$, it holds that
\begin{equation}
  \Pr[\mathcal{M}(D)\in Z] \leq \exp(\varepsilon) \Pr[\mathcal{M}(D')\in Z] + \delta .
\end{equation}
\end{definition}

Practically, we employ a randomized mechanism $\mathcal{M}$ that ensures differential privacy for a function $f$.
The mechanism $\mathcal{M}$ perturbs the output of $f$ to cover $f$'s sensitivity that is the maximum degree of change over any pairs of $D$ and $D'$.

\begin{definition}[Sensitivity]
The sensitivity of $f$ for any two neighboring input $D, D' \in \mathcal{D}$ is 
\begin{equation}
    \Delta_{f} = \sup_{D, D' \in \mathcal{D}} ||f(D)-f(D')||.
\end{equation}
where $||\cdot||$ is a norm function defined on $f$'s output domain.
\end{definition}
Based on the sensitivity of $f$, we design the degree of noise to ensure differential privacy.
Laplace mechanism and Gaussian mechanism are well-known as standard approaches.

Let $M_1, \dots, M_k$ be mechanisms satisfying $(\varepsilon_1, \delta_1)$-, $\dots$, $(\varepsilon_k, \delta_k)$-differential privacy, respectively.
Then, a mechanism sequentially applying $M_1, \dots, M_k$ satisfies ($\sum_{i\in[k]} \varepsilon_i$, $\sum_{i\in[k]} \delta_i$)-differential privacy.
This fact refers to \textit{composability}\cite{dwork2006differential}.
In particular, this composition is called sequential composition.

\subsection{DP-SGD}

Differentially private stochastic gradient descent (DP-SGD) \cite{abadi2016deep}, is a useful optimization technique for learning a model $f$ under differential privacy constraints.
The key idea of DP-SGD is that it adds noise to stochastic gradients during training for making differential privacy guarantees on $f$'s parameters $\theta$.
To obtain the scale of noise, DP-SGD limits $\ell_2$-sensitivity of stochastic gradient $\mathbf{g}$ by clipping its norm.
The gradient clipping $\pi_C$ that limits the sensitivity up to $C$ is denoted as follows:
\begin{equation}
    \pi_C(\mathbf{g}) = \mathbf{g} * \min \left(1,\frac{C}{||\mathbf{g}||_2} \right)
    \label{eq:clip}
\end{equation}
In the DP-SGD, we compute an empirical loss for each micro-batch that includes only one sample.
For each micro-batch, DP-SGD generates its clipped gradient. 
Based on the clipped gradients, DP-SGD crafts a randomized gradient $\Tilde{\mathbf{g}}$ through computing the average over the clipped gradients and adding noise whose scale is defined by $C$ and $\sigma_\varepsilon$, where $\sigma_\varepsilon$ is noise scaler to satisfy $(\varepsilon,\delta)$-DP.
\begin{equation}
    \Tilde{\mathbf{g}} = 
        \frac{1}{B} \left(\sum_{i\in[B]} \pi_C(\mathbf{g}_i) + \mathcal{N}(0,(\sigma_{\varepsilon} C)^2\textbf{I})\right).
    \label{eq:grad_termwise_combine}
\end{equation}
At last, DP-SGD takes a step based on the randomized gradient $\Tilde{\mathbf{g}}$.
Abadi et al. \cite{abadi2016deep} also proposed a moment accountant that maintain privacy loss more precisely than the sequential composition.
In the moment accountant, $\sigma_\varepsilon$ has the following relationship against $\varepsilon$ and $\delta$ (Theorem 1 in \cite{abadi2016deep}).
\begin{equation}
    \sigma_\varepsilon \geq c_2 \frac{q\sqrt{T\log (1/\delta)}}{\varepsilon}
    \label{eq:sigma_eps}
\end{equation}
where $q$ is a sampling probability, $T$ is a number of steps and $c_2$ is a constant number.
To compute the privacy loss through moment accountant, we can utilize a useful tool in Tensorflow privacy \cite{tensorflowprivacy}.

\subsection{Variational Autoencoder}

Variational autoencoder (VAE) \cite{kingma2013auto} is a model to learn parametric latent variables by maximizing the marginal log-likelihood of the training data points.
VAE consists of two parts, inference model $q(z|x)$ for an encoder $g(x;\theta)$, and the likelihood model $p(x|z)$ for a decoder $f(z;\theta)$.

\paragraph{Variational evidence lower bound.}
Introduction of an approximate posterior $q_{\phi}(z|x)$ enable us to construct variational evidence lower bound (ELBO) on log-likelihood $\log p(x)$ as
\begin{equation}
\begin{split}
    \mathcal{L}_{ELBO} &= \log p(x) - D_{KL}(q(z|x) || p(x|z)) \\
    &= \mathbb{E}_{q(z|x)}[\log p(x|z)] - D_{KL}(q(z|x) || p(z)) \\
    &\leq \log p (x) . \\
\end{split}
\end{equation}

To implement encoder and decoder as a neural network, we need to backpropagate through random sampling.
However, such backpropagation does not flow through the random samples.
To overcome this issue, VAE introduces the reparametrization trick.
The trick can be described as $z = \mu + \Sigma \epsilon$ where $\epsilon \sim \mathcal{N}(0, \bm{I})$.
After constructing VAE, we can generate random samples following the two steps; 1) choose a latent vector $z \sim \mathcal{N}(0, \sigma^2I)$, and 2) generate $\Tilde{x}$ by decoding $z$. $\Tilde{x}$ = $f(z)$.

\paragraph{Attaching a divergence for regularization.}
To capture the desired property in the learned representation space of VAEs, we can employ a variety of prior distributions as $p(z)$ and an additional regularization term.
We assume an additional regularization term $D(q(z), p(z))$, that is a divergence between $q(z)$ and $p(z)$.
The ELBO with the regularization is described as follows:
\begin{equation}
    \mathcal{L}_{\text{ELBO}} 
    = \mathbb{E}_{q(z|x)}[\log p(x|z)] 
    - \beta D_{\text{K}L}(q(z|x) || p(z)) 
    - \alpha D(q(z), p(z))
    \label{eq:elbo_reg}
\end{equation}
Several $D(q(z), p(z))$ are difficult to be decomposed into micro-batch losses that DP-SGD requires.

\section{Sensitivity Analysis} \label{sec:analysis}

Here we address a sensitivity study in DP-SGD for VAEs with various loss functions to clarify the required nose scale for ensuring differential privacy on the parameters of VAEs.

\subsection{Learning VAEs in DP-SGD}

Let $batch=\{x_i\}_{i=1}^B$ is a randomly selected samples with sampling probability $B/N$.
We assume the loss function of VAE is formed as the following abstract equation:
\begin{equation}
    \mathcal{L} = - \mathcal{L}_{ELBO} = \mathbb{E}_{x} \left[ \phi(x) \right] + \psi(batch)
    \label{eq:lfb3a}
\end{equation}
where  $\phi(x_i)$ is a function which computes a loss only depend on $x_i$, and $\psi(batch)$ is a function which computes loss value across all samples in \textit{batch} (=$\{x_1,\dots, x_B\}$).
We call $\phi(x_i)$ \textit{sample-wise term}, and $\psi(batch)$ \textit{batch-wise term}.
The loss function (\ref{eq:lfb3a}) is also rewritten as follows: 
\begin{equation}
\textstyle
    \mathcal{L} = \frac{1}{B} \sum_{i \in [B]} \mathcal{L}_i 
                = \frac{1}{B} \sum_{i \in [B]} \left(\phi(x_i) + \psi(batch) \right)
    \label{eq:lfb3b}
\end{equation}
where $\mathcal{L}_i$ is a micro-batch loss.
In DP-SGD, the stochastic gradient of $\mathcal{L}_i$ is clipped by $C$ as (\ref{eq:clip}).
That means the sensitivity of the gradient is bounded by the constant.
At the last step in a batch, we craft a randomized gradient through aggregating the clipped gradients and injecting noise whose scale is $C\sigma_{\varepsilon}$ to ensure differential privacy.
This aggregation has an effort to reduce the variance of the noise.
We call the construction of (\ref{eq:lfb3b}) \textit{micro aggregation}.

Based on the above assumptions, we can see the following series of propositions.
\begin{proposition}\label{pf:1}
Assume $\mathcal{L}_i = \phi(x_i)$ and the stochastic gradient of $\mathcal{L}_i$ is clipped by (\ref{eq:clip}) with the constant $C$, $\ell_2$-sensitivity of $\mathcal{L}$ is $C$.
\end{proposition}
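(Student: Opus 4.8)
The plan is to bound the sensitivity of the object that DP-SGD actually privatizes at each step, namely the aggregated clipped gradient $f(D) = \sum_{i \in [B]} \pi_C(\mathbf{g}_i)$ appearing inside the noise term of (\ref{eq:grad_termwise_combine}); the Gaussian noise there is calibrated to the $\ell_2$-sensitivity of this aggregate, so establishing $\Delta_f = C$ is exactly what justifies the noise scale $\sigma_\varepsilon C$. The starting point is the hypothesis $\mathcal{L}_i = \phi(x_i)$: since $\phi(x_i)$ depends on the single sample $x_i$ only, the micro-batch gradient $\mathbf{g}_i = \nabla_\theta \phi(x_i)$ is a function of $x_i$ alone and is \emph{decoupled} from every other sample in the batch.

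Next I would unfold the neighboring relation. Following the convention of DP-SGD, two neighboring inputs $D$ and $D'$ differ in exactly one sample. Because each $\mathbf{g}_i$ sees only its own $x_i$, altering a single sample $x_j$ leaves every clipped term $\pi_C(\mathbf{g}_i)$ with $i \neq j$ unchanged, so the two aggregates cancel everywhere except at the single index $j$. Concretely,
\begin{equation}
    \left\| f(D) - f(D') \right\|_2 = \left\| \pi_C(\mathbf{g}_j) - \pi_C(\mathbf{g}_j') \right\|_2 ,
\end{equation}
and under the add/remove form of the neighboring relation only one of the two terms is present, leaving a single clipped gradient.

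The bound then follows directly from the clipping definition (\ref{eq:clip}): for any gradient $\mathbf{g}$ one has $\|\pi_C(\mathbf{g})\|_2 = \|\mathbf{g}\|_2 \min\!\left(1, C/\|\mathbf{g}\|_2\right) \le C$. Hence the surviving term has $\ell_2$-norm at most $C$, and taking the supremum over neighboring pairs gives $\Delta_f \le C$; the bound is tight, since any sample whose raw gradient has norm at least $C$ saturates the clip and realizes the difference $C$.

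I expect the only genuinely delicate step to be pinning down the neighboring relation precisely. The decoupling of the per-sample gradients makes the cancellation immediate and mechanical, but the clean constant $C$ (rather than $2C$) relies on treating neighbors as differing by a single added or removed record; under a replace-one definition the triangle inequality would instead yield $2C$. It is therefore important to state this convention explicitly so that the calibration $\sigma_\varepsilon C$ remains consistent with the moment-accountant bound (\ref{eq:sigma_eps}). This proposition also serves as the $O(1)$ baseline against which the later $O(B)$ blow-up from the batch-wise term $\psi(batch)$ is measured.
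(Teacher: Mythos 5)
Your proof is correct and follows essentially the same route as the paper's: since each $\phi(x_i)$ depends only on $x_i$, a change to one sample perturbs exactly one clipped gradient, whose $\ell_2$-norm is bounded by $C$ via (\ref{eq:clip}). Your explicit treatment of the neighboring relation is actually more careful than the paper's one-line argument --- the paper states that ``changing $x_i$ only modifies its clipped gradient'' and concludes $1 \times C$, which, as you correctly point out, is tight only under add/remove semantics (a replace-one convention would give $2C$ by the triangle inequality), a distinction the paper's proof glosses over.
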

\begin{proof}
Let $\textbf{g}_i$ be the stochatic gradient of $\mathcal{L}_i = \phi(x_i)$.
Since $\phi(x_i)$ is independent from $\phi(x_j)$ of $j \neq i$, changing $x_i$ only modifies its clipped gradient $\pi_C(\textbf{g}_i)$.
Thus, the sensitivity is $1 \times C$.
\end{proof}

\begin{proposition}\label{pf:2}
Assume $\mathcal{L}_i = \psi(batch)$ and the stochastic gradient of $\mathcal{L}_i$ is clipped by (\ref{eq:clip}) with the constant $C$, $\ell_2$-sensitivity of $\mathcal{L}$ is $BC$.
\end{proposition}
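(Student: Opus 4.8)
The plan is to mirror the argument of Proposition~\ref{pf:1}, but to carefully track \emph{how many} of the $B$ clipped micro-batch gradients are perturbed when a single input changes. The decisive structural difference is that here every micro-batch loss is the same batch-wise term, $\mathcal{L}_i = \psi(batch)$ for all $i \in [B]$, so each stochastic gradient $\mathbf{g}_i = \nabla_\theta \psi(batch)$ depends on the entire batch rather than on an isolated sample. Recognizing this shared dependence is what drives the whole estimate.

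First I would fix two neighboring inputs differing in exactly one record, say $x_k \to x_k'$, and observe that this modifies the value of $\psi(batch)$ and hence its gradient $\nabla_\theta \psi(batch)$. Because this gradient is identical across all indices, a single change propagates simultaneously into all $B$ micro-batch gradients $\mathbf{g}_1, \dots, \mathbf{g}_B$. This is exactly the opposite of the situation in Proposition~\ref{pf:1}, where altering $x_i$ left the other $B-1$ gradients untouched; the number of affected terms jumps from $1$ to $B$.

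Next I would bound the resulting change in the aggregated pre-noise gradient $\sum_{i\in[B]} \pi_C(\mathbf{g}_i)$, which is the quantity the injected noise is calibrated against. Following the same convention as Proposition~\ref{pf:1}, each affected clipped gradient $\pi_C(\mathbf{g}_i)$ contributes at most $C$ to this change by the clipping bound $\|\pi_C(\cdot)\|_2 \le C$. Since all $B$ terms are now affected simultaneously, a triangle-inequality summation gives an $\ell_2$-sensitivity of at most $B \times C = BC$, which is the claimed bound.

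The main obstacle is conceptual rather than computational: it lies in seeing that attaching a batch-wise term forces every micro-batch gradient to share the same batch-dependent quantity, so the ``number of affected gradients'' scales with $B$. Once that is pinned down, the per-term bound $C$ is inherited verbatim from the clipping operation and the linear blow-up in $B$ is immediate. I would close by noting that this growth is genuine rather than an artifact of loose bounding: one can arrange a $\psi$ whose clipped gradient shifts by order $C$ across neighboring batches, so the $O(B)$ sensitivity is essentially tight and motivates the term-wise treatment introduced later.
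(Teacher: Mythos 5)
Your proof is correct and takes essentially the same route as the paper's: you observe that $\psi(batch)$ is shared across all $B$ micro-batch losses, so a single changed record perturbs all $B$ clipped gradients, each contributing at most $C$ under the same clipping convention as Proposition~\ref{pf:1}, giving $B \times C$ by the triangle inequality. Your version is more explicit than the paper's two-line argument (and adds a useful tightness remark), but the underlying idea is identical.
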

\begin{proof} 
Let $\textbf{g}_i$ be the stochatic gradient of $\mathcal{L}_i = \psi(batch)$.
While $\psi(batch)$ is shared in all $\mathcal{L}_j$, $\forall j\in[B]$, the change of $x_i$  modifies all $\mathcal{L}_j$.
Thus, the sensitivity is $B \times C$.
\end{proof}

\begin{proposition}\label{pf:3}
Assume $\mathcal{L}_i = \phi(x_i) + \psi(batch)$ and the stochastic gradient of $\mathcal{L}_i$ is clipped by (\ref{eq:clip}) with the constant $C$, $\ell_2$-sensitivity of $\mathcal{L}$ is $BC$.
\end{proposition}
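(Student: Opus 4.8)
The plan is to reduce this to the argument behind Proposition~\ref{pf:2}, since the presence of the batch-wise term $\psi(batch)$ inside every micro-batch loss is what governs the sensitivity. First I would write the stochastic gradient of the combined micro-batch loss as $\mathbf{g}_i = \nabla\phi(x_i) + \nabla\psi(batch)$, emphasizing that the second summand is identical across all $j\in[B]$ because $\psi(batch)$ is shared by every $\mathcal{L}_j$. This makes the combined case a superposition of the situations already handled: the $\nabla\phi$ piece behaves like Proposition~\ref{pf:1}, while the $\nabla\psi(batch)$ piece behaves like Proposition~\ref{pf:2}.

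Next I would trace the effect of modifying a single input $x_i$ on the aggregated (unnoised) gradient $\sum_{j\in[B]}\pi_C(\mathbf{g}_j)$. The sample-wise part changes only in the $i$-th term, but the batch-wise part changes in all $B$ terms simultaneously. Hence, unlike Proposition~\ref{pf:1}, every one of the $B$ clipped gradients $\pi_C(\mathbf{g}_j)$ may change when $x_i$ is perturbed. Since clipping guarantees $\|\pi_C(\mathbf{g}_j)\|_2 \le C$ for each $j$, the total change of the aggregate is bounded by $B\times C$, which I would then identify as the claimed $\ell_2$-sensitivity $BC$.

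The subtle point, and the reason the $O(B)$ bound cannot be avoided here, is that the clipping operator $\pi_C$ acts on the \emph{combined} gradient $\mathbf{g}_i$ rather than on $\nabla\phi(x_i)$ and $\nabla\psi(batch)$ separately. One might hope to exploit the fact that $\phi$ is purely sample-wise to recover the $O(1)$ sensitivity of Proposition~\ref{pf:1}, but the entangling of the two contributions inside a single clipped vector prevents this: the shared batch-wise term contaminates every micro-batch gradient, so the worst-case count of affected clipped gradients stays at $B$ regardless of the $\phi$ term. I expect this entanglement to be the main conceptual obstacle to articulate cleanly, and it is precisely the obstacle that motivates the term-wise DP-SGD introduced afterward, where the two contributions are clipped and aggregated separately so that the sensitivity can be kept at $O(1)$.
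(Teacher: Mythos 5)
Your proposal is correct and follows essentially the same route as the paper's own proof: the key observation in both is that $\psi(batch)$ is shared across every micro-batch loss $\mathcal{L}_j$, so perturbing a single $x_i$ can alter all $B$ clipped gradients, each of norm at most $C$, yielding sensitivity $B \times C$. Your additional remarks on the entanglement of the two terms inside a single clipped vector are a useful elaboration (and correctly anticipate the motivation for term-wise DP-SGD), but they do not change the underlying argument.
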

\begin{proof}
As well as the Proof of Proposition \ref{pf:2}, since $\psi(batch)$ is shared in all $\mathcal{L}_j$, $\forall j\in[B]$, the change of $x_i$ modifies all $\mathcal{L}_j$. 
Thus, the sensitivity is $B \times C$.
\end{proof}

From the above three propositions, we reach the following theorem about the sensitivity for learning differentially private VAEs in the DP-SGD manner.

\begin{theorem}\label{theo:sensitivity}
$\ell_2$-sensitivity of $\mathcal{L}$ for learning a vanilla VAE is either $BC$ or $C$.
\end{theorem}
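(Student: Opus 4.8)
The plan is to combine the three preceding propositions via a case analysis on the compositional structure of the VAE loss. First I would observe that every VAE loss considered here---whether the plain ELBO (equation (\ref{eq:elbo_reg}) with $\alpha = 0$) or one augmented with a divergence regularizer---fits the abstract decomposition of equation (\ref{eq:lfb3b}), so that each micro-batch loss takes the form $\mathcal{L}_i = \phi(x_i) + \psi(batch)$. The key observation driving the whole argument is that the $\ell_2$-sensitivity is governed entirely by whether the batch-wise term $\psi(batch)$ is present, since clipping by (\ref{eq:clip}) fixes each per-micro-batch contribution at magnitude $C$ and the only remaining question is \emph{how many} clipped gradients a single input can perturb.

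Next I would split into two cases according to this dichotomy. In the first case the loss contains only sample-wise contributions (equivalently $\psi \equiv 0$), which is exactly the situation of the standard ELBO, where both the reconstruction term and the per-sample KL term depend on $x_i$ alone; here Proposition \ref{pf:1} applies directly and yields sensitivity $C$. In the second case a genuine batch-wise term is present---for instance the divergence $D(q(z), p(z))$ on the aggregate posterior, which couples all samples in the batch---and then Proposition \ref{pf:3} applies and yields sensitivity $BC$. The degenerate sub-case in which only the batch-wise term survives is covered identically by Proposition \ref{pf:2}, which also gives $BC$, so the second case is uniformly $BC$.

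Finally I would conclude that, because these two cases exhaust the admissible compositions of a VAE loss written in the form (\ref{eq:lfb3b}), the $\ell_2$-sensitivity of $\mathcal{L}$ is either $C$ or $BC$, as claimed. The step requiring the most care---rather than a genuine obstacle---is establishing exhaustiveness: one must verify that a VAE loss cannot introduce a term that is neither purely sample-wise nor shared across the entire batch. Since the reparametrized ELBO and the standard divergence regularizers decompose precisely into these two categories, no intermediate coupling arises, and the dichotomy is therefore complete.
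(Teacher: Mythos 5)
There is a genuine gap: you have misidentified what produces the $BC$ branch of the dichotomy. The theorem concerns a \emph{vanilla} VAE, whose loss contains no aggregate-posterior divergence $D(q(z),p(z))$ at all --- that regularized case is handled by the paper's subsequent lemma, not by this theorem. The paper's actual argument is that the very same vanilla loss $\mathcal{L}_i = \text{rec}_i + \text{KL}(q(z|x)\,||\,p(z))$ admits two different micro-batch formulations: if the KL term is computed as a single batch-level quantity shared across all micro-batch losses (the naive reading of the formulation (\ref{eq:lfb3b})), then Proposition \ref{pf:3} applies and the sensitivity is $BC$; but since the KL term \emph{decomposes} as $\text{KL}(q(z|x)\,||\,p(z)) = \frac{1}{B}\sum_{i\in[B]}(\log q(z|x_i) - \log p(z))$, one can rewrite each micro-batch loss in the purely sample-wise form (\ref{eq:vae1}), $\mathcal{L}_i = \text{rec}_i + \text{kld}_i$, and then Proposition \ref{pf:1} gives sensitivity $C$. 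The dichotomy is thus about an \emph{implementation choice} --- whether or not the practitioner exploits the decomposability of the standard KL term --- applied to one and the same loss.

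Your case analysis instead assigns the $C$ case to the vanilla ELBO and the $BC$ case to a VAE augmented with $D(q(z),p(z))$. Under that reading, the vanilla VAE falls only into your first case, so your argument actually proves that its sensitivity is $C$, full stop; it never explains how a vanilla VAE could have sensitivity $BC$, which is half of the claim. The missing idea is precisely the paper's ``fortunately, the KL term can be decomposed'' step: without noticing that $\text{KL}(q(z|x)\,||\,p(z))$ appears batch-wise when written naively yet is secretly sample-wise decomposable, you cannot recover the ``either $BC$ or $C$'' statement for the vanilla model. Your closing exhaustiveness discussion, while reasonable, addresses a side condition rather than this central point.
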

\begin{proof}
Let $rec_i$ be the reconstruction loss (i.e., negative log-likelihood) of $x_i$.
The loss functions of vanilla VAE can be written as
$\mathcal{L}_i = rec_i + \text{KL}(q(z|x)||p(z))$.
For this formulation, the sensitivity is $BC$ from the proposition \ref{pf:3}.
Fortunately, the KL term can be decomposed as follows:
\begin{equation}
    \text{KL}(q(z|x)||p(z)) = \frac{1}{B}\sum_{i \in [B]} (\log q(z|x_i) - \log p(z))
\end{equation}
Thus we can rewrite the loss as sample-wise form that does not depend on the other samples:
\begin{equation}
    \mathcal{L}_i = \text{rec}_i + \text{kld}_i \quad  \text{where} \quad \text{kld}_i=\log q(z|x_i) - \log p(z)
    \label{eq:vae1}
\end{equation}
From Proposition \ref{pf:1}, the sensitivity when we utilize (\ref{eq:vae1}) is $C$.
\end{proof}

\begin{lemma}
Let a VAE introduces an additional regularization term and the regularization term cannot be decomposed into micro-batch losses that every micro-batch depends on an only single input.
The sensitivity of DP-SGD for learning the VAE with the regularization is $BC$.
\end{lemma}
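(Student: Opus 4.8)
The plan is to show that attaching a non-decomposable regularization term places the regularized VAE loss into exactly the form handled by Proposition \ref{pf:3}, whose $\ell_2$-sensitivity is already established as $BC$. First I would start from the regularized objective (\ref{eq:elbo_reg}) and write the per-batch loss as the negative ELBO, $\mathcal{L} = \text{rec} + \beta\,\text{KL}(q(z|x) || p(z)) + \alpha D(q(z), p(z))$. Exactly as in the proof of Theorem \ref{theo:sensitivity}, both the reconstruction term and the KL term admit a sample-wise decomposition, so I would fold them into a single sample-wise term $\phi(x_i) = \text{rec}_i + \beta\,\text{kld}_i$ that depends only on $x_i$.

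Next I would argue that the additional regularizer $\psi(batch) = \alpha D(q(z), p(z))$ is a genuine batch-wise term. By the hypothesis of the lemma, $D(q(z), p(z))$ cannot be split into a sum of per-sample contributions: it is defined on the aggregate posterior $q(z)$, which is formed from all samples in the batch at once. Consequently $\psi(batch)$ is shared across every micro-batch loss $\mathcal{L}_j$, and the regularized micro-batch loss takes the form $\mathcal{L}_i = \phi(x_i) + \psi(batch)$, which is precisely the hypothesis of Proposition \ref{pf:3}.

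I would then invoke Proposition \ref{pf:3} directly. Because $\psi(batch)$ appears in all $B$ micro-batch losses, modifying a single input $x_i$ perturbs every $\mathcal{L}_j$ through the shared regularizer; each of the $B$ clipped gradients can therefore change by up to $C$, yielding an $\ell_2$-sensitivity of $\mathcal{L}$ equal to $B \times C = BC$. The main obstacle here is conceptual rather than computational: I must make precise that the non-decomposability assumption genuinely forces $\psi(batch)$ into the batch-wise category, so that a single shared term propagates into all micro-batch losses and inflates the sensitivity by the factor $B$, regardless of how the decomposable reconstruction and KL terms behave.
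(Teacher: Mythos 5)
Your proof is correct and matches the paper's intended justification: the paper states this lemma without an explicit proof precisely because it is an immediate consequence of Proposition~\ref{pf:3}, which is exactly the reduction you carry out (fold the decomposable reconstruction and KL terms into $\phi(x_i)$, identify the non-decomposable regularizer as the shared batch-wise term $\psi(batch)$, and invoke the $BC$ bound). Nothing is missing; your writeup simply makes explicit what the paper leaves implicit.
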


On the other hand, DP-SGD is applicable not only for micro-batches but also for the overall batch ($\mathcal{L}=\psi(batch)$).
When we craft the randomized gradient from the overall batch, the stochastic gradient' sensitivity keeps at $O(1)$.
We call this construction \textit{batch aggregation}.
By employing the batch aggregation, we can compute a divergence from all samples in the batch without increasing the sensitivity.
However, the batch aggregation also injects a large amount of noise because it does not have a factor to reduce the noise that micro-batch organizations have.
Therefore, DP-SGD often organizes the micro-batches whose size is one for crafting the randomized gradient.

\subsection{Privacy Leakage}

As discussed the above study, ill constructions of the randomized gradient that aggregates micro-batch losses like $\mathcal{L}_i = \text{rec}_i +\text{kld}_i + D(q(z), p(z))$ and injects insufficient scale of noise to cover the increased sensitivity fail into differential privacy guarantee that we expected.
In this case, unfortunately, the information of $D(q(z), p(z))$ that depends on inputs of the whole batch is leaked.
By this leaked sensitive information, we might get beautiful results, but it is the result of our poor understanding of gradient constructions in the DP-SGD manner.

\subsection{Augmentation for Estimating Reconstruction Error} \label{sec:marginal}

Back to the original VAE \cite{kingma2013auto}, the stochastic gradient variational Bayes (SGVB) estimator enables us to compute the ELBO over a single batch as:
\begin{equation}
    \mathcal{L}_{ELBO}(x_i) = -\mathcal{L}_i = -\text{kld}_i + \frac{1}{L}\sum_{l=1}^{L}\log p(x_i|z_{i,l})
\label{eq:elbo_L}
\end{equation}
In the original VAE, we can set $L=1$ if the batch size is large enough \footnote{\cite{kingma2013auto} mentioned that L can be set to 1 as long as the minibatch size was large enough. e.g. $B=100$.}.
However, DP-SGD assumes \textit{micro-batches} whose size is 1. 
In order to accurately estimate the log-likelihood around $x_i$, we should set $L$ in no small number.
Thanks to gradient clipping (\ref{eq:clip}), the sensitivity is still bounded by $C$ even when utilizing a large $L$.
Since $\frac{1}{L}\sum_{l \in [L]}\log p(x_i|z_{i,l})$ is independent from $x_{j \neq i}$, and the stochastic gradient including it is clipped by the constant $C$, the sensitivity is bounded by $C$ against any $L$.

From the above discussion, we can utilize augmentations that reduce the reconstruction error without increasing the sensitivity.
However, it consumes much more computational time and memory spaces.

\section{Proposed Method}

Based on the sensitivity analysis, we present how to learn differentially private variational autoencoders with suppressing the amount of noise.
We first introduce a general model \method that learns variational autoencoder in a differentially private way.
Second, we propose a novel learning technique \textit{term-wise DP-SGD} that reduces the amount of noise for DP by decomposing stochastic gradients into term-wise components.
Our proposed method also utilizes the augmentation that attempts to reduce the reconstruction error, as discussed in section \ref{sec:marginal}.

\subsection{\method: a general model of differentially private VAE}

Our basic idea is to decompose the terms of the loss function into two groups and compose a noisy gradient that ensures the DP group by group.
For each group, we separately run the gradient aggregation sequence for DP, which consists of computing stochastic gradients, clipping gradients, and adding noise as following the DP-SGD manner.

Towards reducing the amount of noise, we first introduce the notation of partitions.
Let $s$ be a partition of \textit{batch}, where $s=\{x_1, \dots, x_{|s|}\}$, $x_i \in  batch$.
Any pairs of $s_j$ and $s_k$ ($j \neq k$) are mutually disjoint, that is $s_j\cap s_{k\neq j}=\emptyset$.

\paragraph{Objective function of \method .}
\method minimizes the objective function described below:
\begin{equation}
\textstyle
    \mathcal{L} = \mathcal{L}_{sample} + \mathcal{L}_{batch} = \mathbb{E}_{x} \left[ \phi(x) \right] + \mathbb{E}_{s} \left[ \psi(s) \right]
    = \frac{1}{B}\sum_{i \in [B]} \phi(x_i) + \frac{1}{b}\sum_{j \in [b]} \psi(s_j)
    \label{eq:privae_loss}
\end{equation}
where $\phi(x)=\text{rec}(x) + \beta \text{kld}(x)$, $\psi(s)=\alpha D(q(z), p(z))$ and $s$ is the partition denoted the above.
Let $b$ is the number of the partitions, and $\bigcup_{j \in [b]} s_j = batch$.
Note, (\ref{eq:privae_loss}) with $b=1$ is identical to (\ref{eq:lfb3a}).

\begin{algorithm}[tb]
    \caption{Termwise DP-SGD}
    \label{alg2}
    \begin{algorithmic}
        \STATE {\bfseries Input:} $x_1, \dots, x_N$
        \STATE {\bfseries Parameters:} learning rate $\eta_t$, noise scale $\sigma_{\varepsilon}'$, batch size $B$, \#partitions  $b$, clipping size $C_1$ and $C_2$
        \STATE {Initialize $\theta_0$ randomly\;}
        \FOR{$t$ {\bfseries in} $[T]$}
            \STATE randomly sample \textit{batch} with probability $B/N$
            \STATE $\Bar{\mathbf{g}}_{\text{sample}} \leftarrow \sum_{i \in [B]} \pi_{C_1}(\nabla_{\theta_t} \phi(x_i; \theta_t))$
            \STATE randomly sample sub-group $s_j$ from \textit{batch} with probability $b/B$
            \STATE $\Bar{\mathbf{g}}_{\text{batch}} \leftarrow \sum_{j \in [b]} \pi_{C_2}(\nabla_{\theta_t} \psi(s_j; \theta_t))$
            \STATE $\Tilde{\mathbf{g}} \leftarrow \frac{1}{B}(\Bar{\mathbf{g}}_{\text{sample}} + \mathcal{N}(0,(\sigma_{\varepsilon}'C_1)^2\textbf{I}))
            + \frac{1}{b}(\Bar{\mathbf{g}}_{\text{batch}} + \mathcal{N}(0,(\sigma_{\varepsilon}'C_2)^2\textbf{I}))$ 
            \STATE $\theta_{t+1} \leftarrow \theta_t - \eta_t \Tilde{\mathbf{g}}$
        \ENDFOR
        \STATE {\bfseries Output:} $\theta_T$
    \end{algorithmic}
\end{algorithm}

\subsection{Termwise DP-SGD}

We propose termwise DP-SGD that composes noisy gradient for DP in a term-wise way.
The termwise DP-SGD crafts the noisy gradients for sample-wise terms $\phi(x_i)$ and batch-wise terms $\psi(s_j)$, separately.
In the last phase of termwise DP-SGD, it combines these noisy gradients and updates parameters $\theta$.
The overall proposed procedure of termwise DP-SGD is in Algorithm \ref{alg2}.

\paragraph{Gradient aggregation for sample-wise term.}
For each sample-wise term $\phi(x_i)$, we craft its clipped gradient $\pi_{C_1}(\nabla_{\theta} \phi(x_i; \theta))$ with clip size $C_1$.
We then aggregate the sum of the clipped gradients as follows:
\begin{equation}
    \textstyle
    \Bar{\mathbf{g}}_{\text{sample}} = \sum_{i \in [B]} \pi_{C_1}(\nabla_{\theta} \phi(x_i; \theta)) .
    \label{eq:grad_sample}
\end{equation}

\paragraph{Gradient aggregation for batch-wise term.}
For the batch-wise terms $\psi(s_j)$, we first partition $batch$ into sub-groups $s_1, \dots, s_b$ where $b \leq B$.
We then compute $\psi(s_j)$ for $j \in [b]$ and aggregate their clipped gradients with clip size $C_2$ as described below:
\begin{equation}
    \textstyle
    \Bar{\mathbf{g}}_{\text{batch}} = \sum_{j \in [b]} \pi_{C_2}(\nabla_{\theta} \psi(s_j; \theta)) .
    \label{eq:grad_batch}
\end{equation}

\paragraph{Term-wise noise injections and concatenation.}
Finally we combine the above two gradients as 
\begin{equation}
    \Tilde{\mathbf{g}} = 
        \frac{1}{B} \left(\Bar{\mathbf{g}}_{\text{sample}} + \mathcal{N}(0,(\sigma_{\varepsilon}'C_1)^2\textbf{I})\right)
        + \frac{1}{b} \left(\Bar{\mathbf{g}}_{\text{batch}} + \mathcal{N}(0,(\sigma_{\varepsilon}'C_2)^2\textbf{I})\right).
    \label{eq:grad_termwise_combine}
\end{equation}
where $\sigma_{\varepsilon}' = \kappa \sigma_{\varepsilon}$, $\kappa=\frac{\sigma_{\varepsilon/2}}{\sigma_{\varepsilon}}=2\sqrt{\frac{\log \delta/2}{\log \delta}}$ if $C_2$ > 0, otherwise $\kappa=1$.
$\kappa$ can be derived from (\ref{eq:sigma_eps}).
The ratio of $C_2$ to $C_1$ plays an important role that adjusts the scale of (\ref{eq:grad_batch}) against (\ref{eq:grad_sample}) like $\alpha$.

\subsection{Discussion}

We here discuss the privacy guarantee and noise scale of our proposed method.

\begin{theorem}
Term-wise DP-SGD with the noise scale $\sigma_{\varepsilon}'$ satisfies ($\varepsilon$, $\delta$)-differential privacy if DP-SGD with $\sigma_{\varepsilon}$ satisfies ($\varepsilon$, $\delta$)-differential privacy for a VAE that has no batch-wise terms. 
\label{theo:proposed_dp}
\end{theorem}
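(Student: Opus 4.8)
The plan is to read term-wise DP-SGD as a sequential composition of two independent Gaussian mechanisms---one driving the sample-wise gradient and one driving the batch-wise gradient---and to give each half of the $(\varepsilon,\delta)$ budget, so that composition restores the full target.

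First I would pin down the two $\ell_2$-sensitivities. The sample-wise aggregate $\Bar{\mathbf{g}}_{\text{sample}}$ in (\ref{eq:grad_sample}) has sensitivity $C_1$ by Proposition \ref{pf:1}, because each $\phi(x_i)$ depends only on $x_i$ and its clipped gradient has norm at most $C_1$. For the batch-wise aggregate $\Bar{\mathbf{g}}_{\text{batch}}$ in (\ref{eq:grad_batch}) I would invoke the disjointness of the partition: since $s_j\cap s_{k\neq j}=\emptyset$, a single record lies in exactly one sub-group $s_j$, so modifying it perturbs only that one clipped gradient, whose norm is bounded by $C_2$. Hence the batch-wise sensitivity is also $C_2=O(1)$, which is exactly what the partitioning device buys us relative to the $BC$ blow-up of Proposition \ref{pf:2}.

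Next I would calibrate noise to a halved budget. Adding $\mathcal{N}(0,(\sigma_{\varepsilon}'C_1)^2\mathbf{I})$ to a quantity of sensitivity $C_1$ is a Gaussian mechanism with noise multiplier $\sigma_{\varepsilon}'$, and likewise for the batch-wise term; the common multiplier is $\sigma_{\varepsilon}'=\kappa\sigma_{\varepsilon}=\sigma_{\varepsilon/2}$ when $C_2>0$. Since by hypothesis the sample-wise-only DP-SGD with multiplier $\sigma_{\varepsilon}$ is $(\varepsilon,\delta)$-DP through the moment accountant, the same relationship (\ref{eq:sigma_eps}) with $\varepsilon\mapsto\varepsilon/2$ and $\delta\mapsto\delta/2$ shows that multiplier $\sigma_{\varepsilon/2}$ yields $(\varepsilon/2,\delta/2)$-DP over the $T$ steps; indeed $\kappa=2\sqrt{\log(2/\delta)/\log(1/\delta)}$ is precisely the ratio $\sigma_{\varepsilon/2}/\sigma_{\varepsilon}$ read off from (\ref{eq:sigma_eps}). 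I would then argue that the $T$-step batch-wise trajectory, sharing the same number of compositions and the same per-record sampling probability, obeys the identical bound and is therefore also $(\varepsilon/2,\delta/2)$-DP.

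I would close with composition and post-processing. The two trajectories use independent noise, so releasing the pair $(\Tilde{\mathbf{g}}_{\text{sample}},\Tilde{\mathbf{g}}_{\text{batch}})$ at every step is $(\varepsilon/2+\varepsilon/2,\delta/2+\delta/2)=(\varepsilon,\delta)$-DP by sequential composition, and the combined update $\Tilde{\mathbf{g}}$ of (\ref{eq:grad_termwise_combine}), being a deterministic function of that pair, inherits $(\varepsilon,\delta)$-DP by post-processing; the degenerate case $C_2=0$ reduces to ordinary DP-SGD with $\kappa=1$. The step I expect to be the main obstacle is the transfer of the moment-accountant bound to the batch-wise mechanism: one must verify that its effective per-record sampling probability and its composition count match the baseline so that (\ref{eq:sigma_eps}) returns the same $\sigma_{\varepsilon/2}$, and it is the disjoint-partition structure together with the $b/B$ sub-sampling that makes this accounting line up.
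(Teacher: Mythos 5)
Your proposal is correct and follows essentially the same route as the paper's proof: calibrate the shared noise multiplier $\sigma_{\varepsilon}'=\kappa\sigma_{\varepsilon}$ so that each of the two noisy aggregates (sample-wise and batch-wise) is $(\varepsilon/2,\delta/2)$-DP, then invoke sequential composition to recover $(\varepsilon,\delta)$-DP. The additional details you supply --- the sensitivity bounds $C_1$ and $C_2$ via disjointness of the partition, the derivation of $\kappa$ from (\ref{eq:sigma_eps}), and the post-processing step for the combined update $\Tilde{\mathbf{g}}$ --- are points the paper leaves implicit or relegates to its separate sensitivity lemma, so your write-up is a fuller version of the same two-line argument.
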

\begin{proof}
$\sigma_{\varepsilon}' = \kappa \sigma_{\varepsilon}$ is the noise scale that satisfies ($\varepsilon/2$, $\delta/2$)-DP.
From the sequential composition of the first term and the second term in (\ref{eq:grad_termwise_combine}), the sum of the two terms satisfies ($\varepsilon$, $\delta$)-DP.
\end{proof}

\begin{lemma}
$\ell_2$-sensitivity of $\mathcal{L}$ (\ref{eq:privae_loss}) is $C_1 + C_2$. That means the sensitivity is $O(1)$.
\end{lemma}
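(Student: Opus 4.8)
The plan is to bound the sensitivity term by term, exploiting the additive split of the aggregated gradient together with the disjointness of the partitions. Concretely, I would write the pre-noise gradient as the sum of a sample-wise part $\Bar{\mathbf{g}}_{\text{sample}}$ from (\ref{eq:grad_sample}) and a batch-wise part $\Bar{\mathbf{g}}_{\text{batch}}$ from (\ref{eq:grad_batch}), control the change of each part separately when a single input is modified, and then recombine them by the triangle inequality over neighboring datasets $D, D'$.

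First I would handle the sample-wise part. Since $\phi(x_i) = \text{rec}(x_i) + \beta\,\text{kld}(x_i)$ depends only on $x_i$, modifying one input alters exactly one summand $\pi_{C_1}(\nabla_\theta \phi(x_i;\theta))$ of $\Bar{\mathbf{g}}_{\text{sample}}$ and leaves every other summand untouched. This is precisely the situation of Proposition~\ref{pf:1}, so the change of $\Bar{\mathbf{g}}_{\text{sample}}$ in $\ell_2$ norm is at most $C_1$.

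Second, and this is where the partitioning does the real work, I would bound the batch-wise part. The key observation is that the partitions $s_1,\dots,s_b$ are mutually disjoint and cover \textit{batch}, so every input $x_i$ lies in exactly one sub-group $s_j$. Hence modifying $x_i$ changes $\psi(s_j)$ for that single index $j$ only, and therefore perturbs exactly one clipped summand $\pi_{C_2}(\nabla_\theta \psi(s_j;\theta))$ of $\Bar{\mathbf{g}}_{\text{batch}}$. This localized dependence is exactly the structural feature that defeats the $O(B)$ blow-up of Propositions~\ref{pf:2}--\ref{pf:3}: there the shared term $\psi(\textit{batch})$ appeared in all $B$ micro-batch losses, whereas here the change of $\Bar{\mathbf{g}}_{\text{batch}}$ is capped at $C_2$.

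Finally I would combine the two bounds: treating the overall clipped gradient as $\Bar{\mathbf{g}}_{\text{sample}} + \Bar{\mathbf{g}}_{\text{batch}}$ and applying the triangle inequality gives a total $\ell_2$-sensitivity of at most $C_1 + C_2$, which is $O(1)$ since $C_1$ and $C_2$ are fixed clipping constants. The main obstacle I anticipate is making the disjointness argument airtight: I must confirm that the assignment of each sample to a sub-group is fixed independently of the sample values, so that modifying $x_i$ cannot reshuffle which partition contains it and thereby affect more than one $\psi(s_j)$. Once that point is pinned down, the remainder is just Proposition~\ref{pf:1} applied to each part plus the triangle inequality.
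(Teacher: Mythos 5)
Your proposal is correct and follows essentially the same argument as the paper's proof: disjointness of the partitions means a single modified input affects only one clipped summand in each of $\Bar{\mathbf{g}}_{\text{sample}}$ and $\Bar{\mathbf{g}}_{\text{batch}}$, giving per-term sensitivities $C_1$ and $C_2$ that combine to $C_1+C_2$. Your added remarks (the explicit appeal to the triangle inequality and the caveat that partition assignment must not depend on sample values) only make explicit what the paper leaves implicit.
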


\begin{proof}
Since all $s_j$ and $s_{k\neq j}$ are disjoint, the change of any single $x_i \in batch$ influences only $\phi(x_i)$ and $\psi(s_j)$ where $x_i \in s_j$.
Thus, $\ell_2$-sensitivity of $\mathcal{L}_{sample}$ and $\mathcal{L}_{batch}$ is $C_1$ and $C_2$, respectively.
Finally, $\ell_2$-sensitivity of $\mathcal{L}$ is $C_1+C_2$.
\end{proof}

In (\ref{eq:privae_loss}) and (\ref{eq:grad_termwise_combine}), the computation of $\psi$ for each partition results in under-estimation against $\psi(batch)$, but it brings the reduction of the noise variance for the second term.
In (\ref{eq:grad_termwise_combine}), the noise $\mathcal{N}(0,(\sigma_{\varepsilon}'C_2)^2\textbf{I})$ can be divided by the number of partitions $b$.
Therefore, we can manipulate the degree of the trade-off between the estimation accuracy of $\psi(batch)$ and the second term's noise scale by $b$.

Finally, we discuss the noise scale.
In the existing method DP-SGD with a divergence, the overall noise scale is $\Sigma_{\varepsilon}=\frac{BC\sigma_{\varepsilon}}{B}=C\sigma_{\varepsilon}$.
While our term-wise DP-SGD has $\Sigma_{\varepsilon}'=\frac{C_1\sigma_{\varepsilon}'}{B}+\frac{C_2\sigma_{\varepsilon}'}{b} \approx 2\left(\frac{C_1}{B}+\frac{C_2}{b}\right)\sigma_{\varepsilon}$ by using $\sigma_{\varepsilon}' =\kappa \sigma_{\varepsilon} \approx 2\sigma_{\varepsilon}$.
In the DP-SGD with divergence, the order of the noise scale can be written as $O(1)$, while our proposed method has $O(1/b)$ since $B \geq b$.

Table \ref{tb:comp} summarizes the sensitivity and noise scale of DP-SGD and our term-wise DP-SGD.

\begin{table}[t!]
    \centering
    \caption{Sensitivity and noise scale for learning differentially private models with a batch-wise term.}
    \begin{tabular}{lccc}
        \toprule
        & $\mathcal{L}$ & $\ell_2$-sensitivity of $\mathcal{L}$ & noise scale \\ 
        \midrule
        DP-SGD (micro agg.) 
        & $\frac{1}{B}\sum_{i=1}^{B} (\phi(x_i) + \psi(batch)) $
        & $BC$ & $C\sigma_\varepsilon$ \\
        DP-SGD (batch agg.) 
        & $\psi(batch)$
        & $C$ &  $C\sigma_\varepsilon$ \\
        \textbf{Term-wise DP-SGD}
        & $\frac{1}{B}\sum_{i=1}^{B} \phi(x_i) + \frac{1}{b}\sum_{j=1}^{b} \psi(s_j)$
        & $C_1+C_2$ & $(\frac{C_1}{B}+\frac{C_2}{b})\kappa\sigma_\varepsilon$ \\
        \bottomrule
    \end{tabular}
    \label{tb:comp}
\end{table}

\section{Evaluation}

In this section, we demonstrate the effectiveness of our proposed method \method with two different tasks.
We evaluate our method in a sparse coding task and a clustering task.
Each task employs a different prior distribution as $p(z)$ and divergence as the regularization term $\psi(s)$.
The experimental settings, including datasets, neural network architectures, construction of prior distributions, regularization divergences, and evaluation metrics, follow the experiments in \cite{mathieu2019disentangling}.
The experimental codes are developed in Python 3.7 and PyTorch 1.5 \cite{paszke2017automatic} and run on machines with a Tesla V100 GPU.

\subsection{Sparsity}

\begin{figure}[t]
    \centering    
    \subfloat[Sparsity]{
        \includegraphics[width=0.285\hsize]{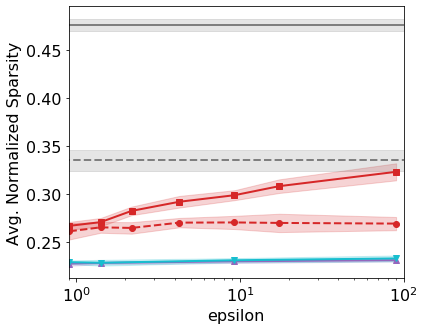}
        \label{fig:fm_sparsity}
    }
    \subfloat[Log-likelihood]{
        \includegraphics[width=0.3\hsize]{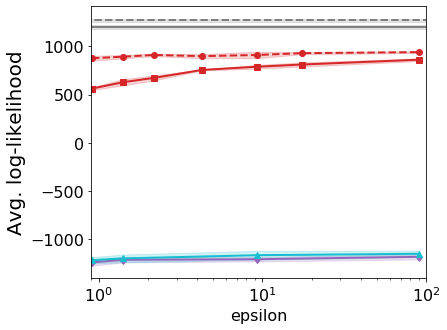}
        \label{fig:fm_recon}
    }
    \subfloat[MMD($q_\phi(z)$, $p(z)$)]{
        \includegraphics[width=0.39\hsize]{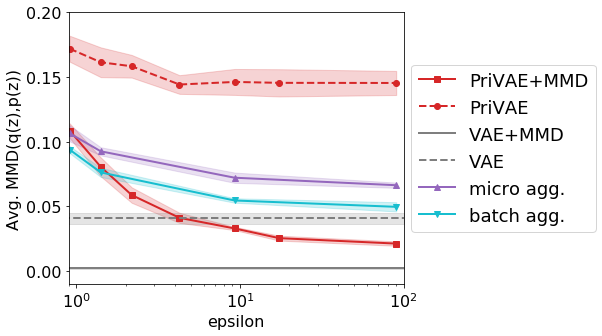}
        \label{fig:fm_mmd}
    }
    \caption{Our model \method works with MMD as a regularization term, and obtains sparsity.}
    \label{fig:sparse_prior}
\end{figure}

We first consider a sparse representation that only a small fraction of available factors are employed for reconstructions.
In this task, we utilize the Fashion-MNIST dataset \cite{xiao2017fashion}.
As well as \cite{mathieu2019disentangling}, we construct a sparse prior as $p(z)=\Pi_d(1-\gamma)\mathcal{N}(z_d;0,1)+\gamma \mathcal{N}(z_d;0,\sigma^2_0)$ with $\sigma^2_0=0.05$.
This mixture distribution can be interpreted as a mixture of samples being either \textit{off} or \textit{on}, whose proportion is set by $\gamma$.
We set $\gamma$=0.8.
The regularization term we utilize here is a dimension-wise MMD with a sum of Cauchy kernels on each dimension ($k(\bm{x},\bm{y})=\sum^D_{d=1}\sum^L_\ell \frac{\sigma_\ell}{\sigma_{\ell=1}+(x_d-y_d)^2}$) with $\sigma_\ell \in \{0.2,0.4,1,2,4,10\}$.
To measure a sparsity of the latent representations, we employ the sparsity metric defined with the Hoyer extrinsic metric \cite{hurley2009comparing} as follows:
\begin{equation}
\begin{gathered}
\textstyle
    \text{Sparsity} = \frac{1}{n}\sum_{i \in [n]} \text{Hoyer}(\Bar{\bm{z}}_i), \quad
    \text{Hoyer}(\bm{y}) = \frac{\sqrt{d}-\|\bm{y}\|_1/\|\bm{y}\|_2}{\sqrt{d}-1}
\end{gathered}
\end{equation}
where $\Bar{\bm{z}}_i$ is a vector whose $d$-th dimensional value $\Bar{z}_{i,d}=z_{i,d}/\sigma(z_{i,d})$.
$\sigma(z_{i,d})$ is the standard deviation of $d$-th dimentional latent encoding taken over the dataset.
The $\text{Hoyer}(\bm{y}) \in [0,1]$ represents 0 for fully dense vector and 1 for a fully sparse vector.

We use the same convolutional neural networks for both the encoder and decoder as in \cite{mathieu2019disentangling} with $D$=50 dimensional latent space.
In this task, we use SGD optimizer with $C_1$=0.05, $\eta$=0.001, $\beta$=1, $B$=256, $b$=16, $L$=1 for all privatized models, and $C_2$=0.005 for \method with the MMD and $C_2$=0 for \method without it.
For non-private VAEs, we use Adam optimizer with $\eta$=0.0005, $B$=256.
For both VAE and \method, we set $\alpha$=100 when attaching the MMD.
We also compare with DP-SGD using micro-agg. and batch-agg..
For these methods, we set $C$=0.0002 to avoid exploding gradients.
The other hyper-parameters are the same as \method with MMD.
All models are trained in 10 epochs.

Figure \ref{fig:sparse_prior} shows the substantial sparsity by the sparse prior (Figure \ref{fig:fm_sparsity}), the log-likelihood (Figure \ref{fig:fm_recon}), and the MMD between q(z) and p(z) (Figure \ref{fig:fm_mmd}), those results are observed at several privacy parameter $\varepsilon$.
We plot the average over ten observations.
The shaded regions are $\pm$ 1 standard deviation around the averages.
In Figure \ref{fig:fm_sparsity}, \method with the regularization (\method+MMD) demonstrates higher sparsity than the model that does not have it.
Although it has a gap between the non-private regularized model (VAE+MMD), our proposed model achieved increasing the sparsity even under differential privacy constraints.
In the MMD between $q(z)$ and $p(z)$, \method+MMD shows smaller values against \method without it.
By employing the regularization term, \method could obtain the sparsity and reduce the MMD, but it was not easy to simultaneously increase the log-likelihood.
The trade-off between them seems more significant than non-private models.
To obtain more sparsity, \method needs to improve reconstruction performance.

\subsection{Clustering Latent Space}

Next, we consider a differentially private VAE that wishes to impose \textit{clustering} of the latent space.
For this experiment, we utilize the pinwheel dataset from \cite{johnson2016composing}, with $n$=400 observations, clustered in 4 spirals.
Following the experiment in \cite{mathieu2019disentangling}, we utilize a mixture of four Gaussians as the prior, $\text{KL}(p(z)||q(z))$ as a regularization divergence, and fully-connected neural networks for both encoder and decoder.
The prior is defined as 
$p(z)=\sum_{k=1}^K\pi^k \prod_{d=1}^D \mathcal{N}(\bm{z}|\mu^k_d,\sigma^k_d)$  
with $D$=2, $K$=4, $\sigma_d^k$=0.03, $\pi^k$=$1/K$, and $\mu_d^k\in\{0,1\}$.
The divergence is defined as
$\text{KL}(p(z)||q(z))\approx\sum_{j=1}^{|s|} (\log p(z_j)-\log \sum_{i=1}^{|s|}q(z_j|x_i) )$.
We set $C_1$=0.05, $\eta$=0.01, $B$=20, $b$=1, $L$=20 for all models, $C_2$=0.0005, $\beta$=0 for \method with $\text{KL}(p(z)||q(z))$ and $C_2$=0, $\beta$=1 for \method without it.

We compare the clustering performance between \method with/out the regularization term $\text{KL}(p(z)||q(z))$.
Figure \ref{fig:clustering} shows the reconstructions of the pinwheel data and the (clustered) representations.
The first two columns demonstrate the results of \method without $\text{KL}(p(z)||q(z))$, and the others show those of \method with $\text{KL}(p(z)||q(z))$.
In the figures the red dots represent the original inputs, the yellow dots are their reconstructions, and the blue dots show the data points in the latent spaces.
\method without the regularization demonstrates poor reconstructions against the raw pinwheel clustered data.
While, \method with $\text{KL}(p(z)||q(z))$ generated better reconstructions than the model without it even though the generated samples have small reconstruction errors.
The learned representations of \method with the regularization are well clustered and fitted to the prior that is the four mixture of Gaussians.
Through these results, our proposed model worked well with employing the prior and the regularization term those intended to capture the clusters of the pinwheel data.

\begin{figure}[t]
    \centering    
    \subfloat[\method ($\varepsilon=2.87$)]{
        \includegraphics[width=0.24\hsize]{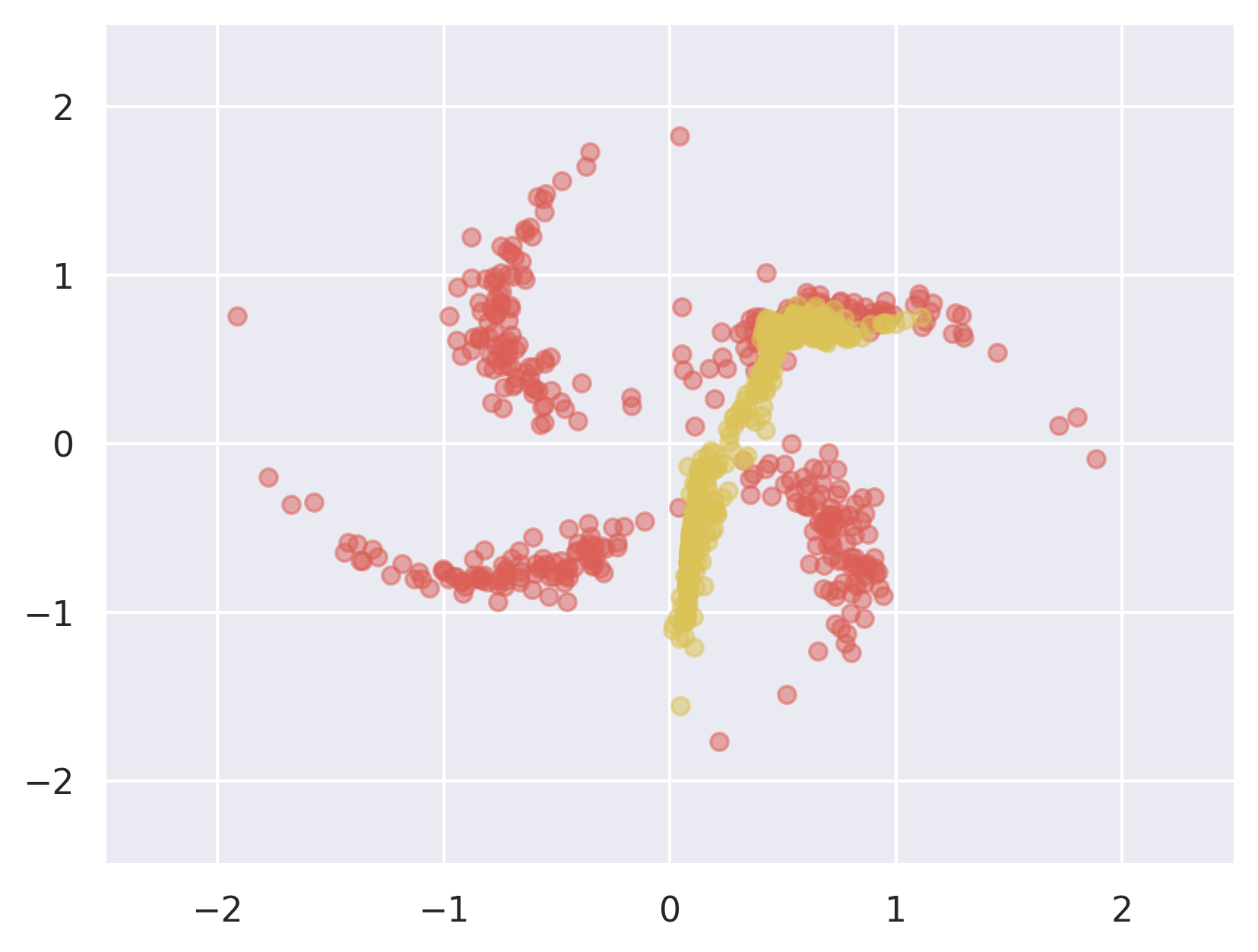}
        \includegraphics[width=0.24\hsize]{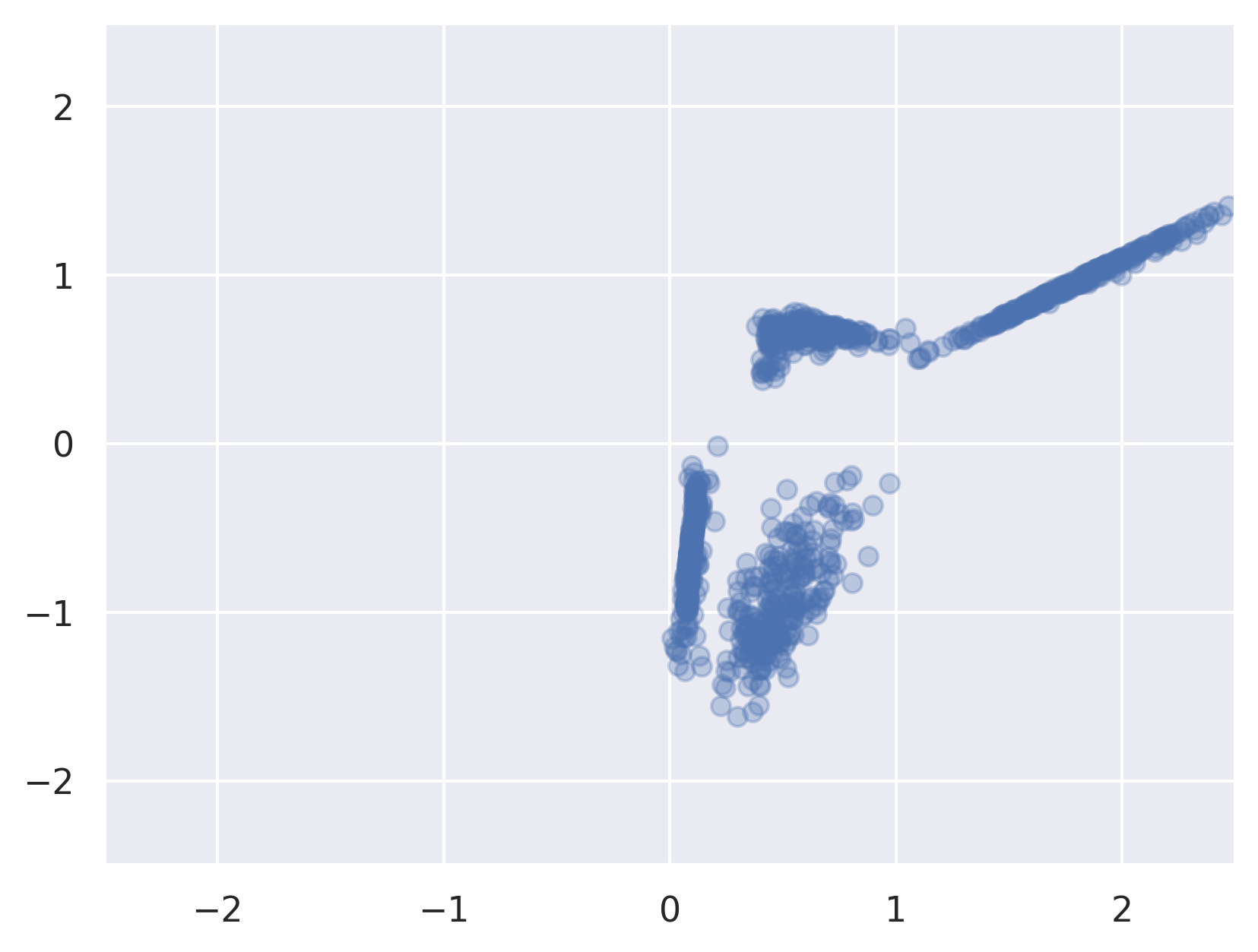}
        \label{fig:pin_rep_b1_1}
    }
    \subfloat[\method with $\text{KL}(p(z)||q(z))$ ($\varepsilon=2.87$)]{
        \includegraphics[width=0.24\hsize]{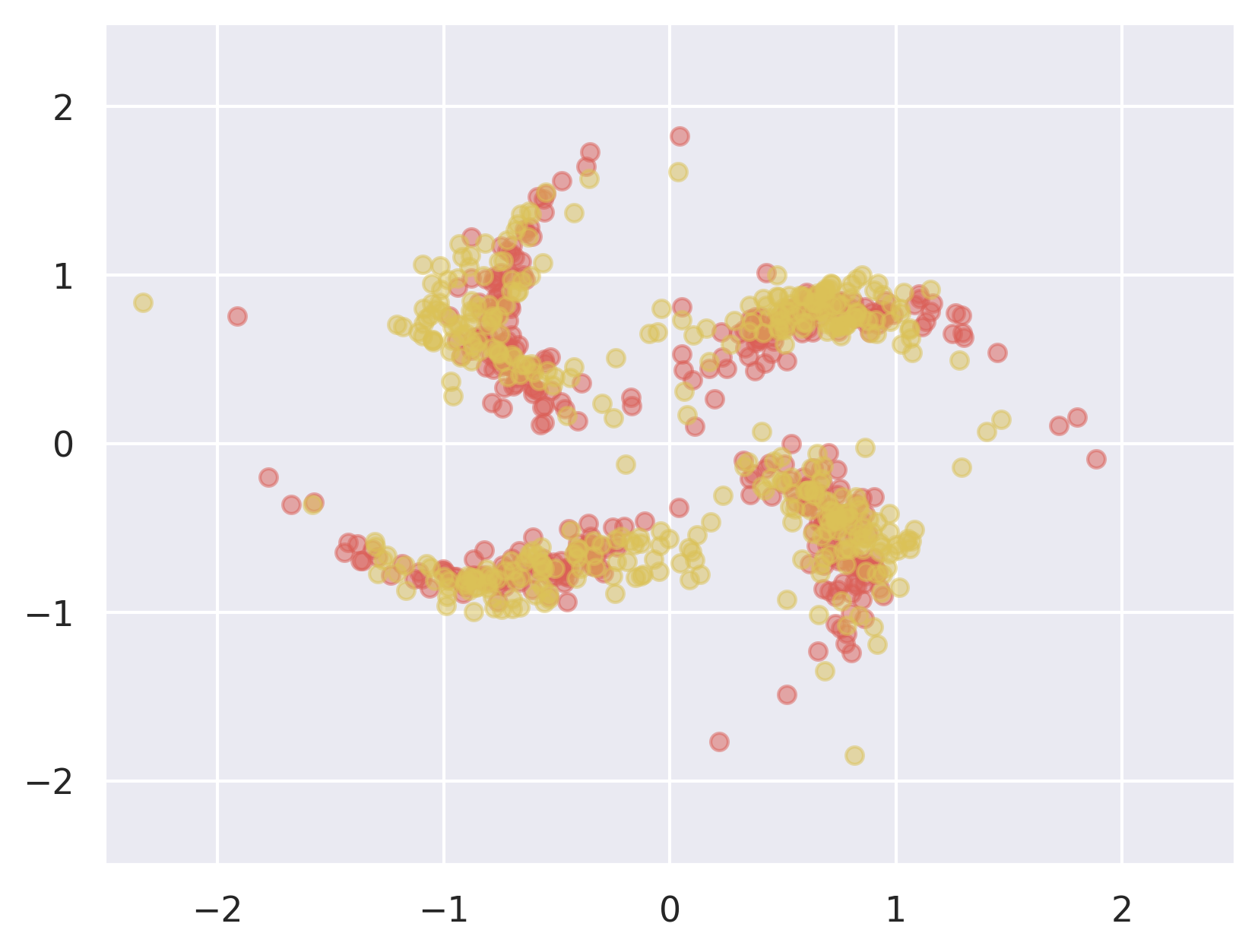}
        \includegraphics[width=0.24\hsize]{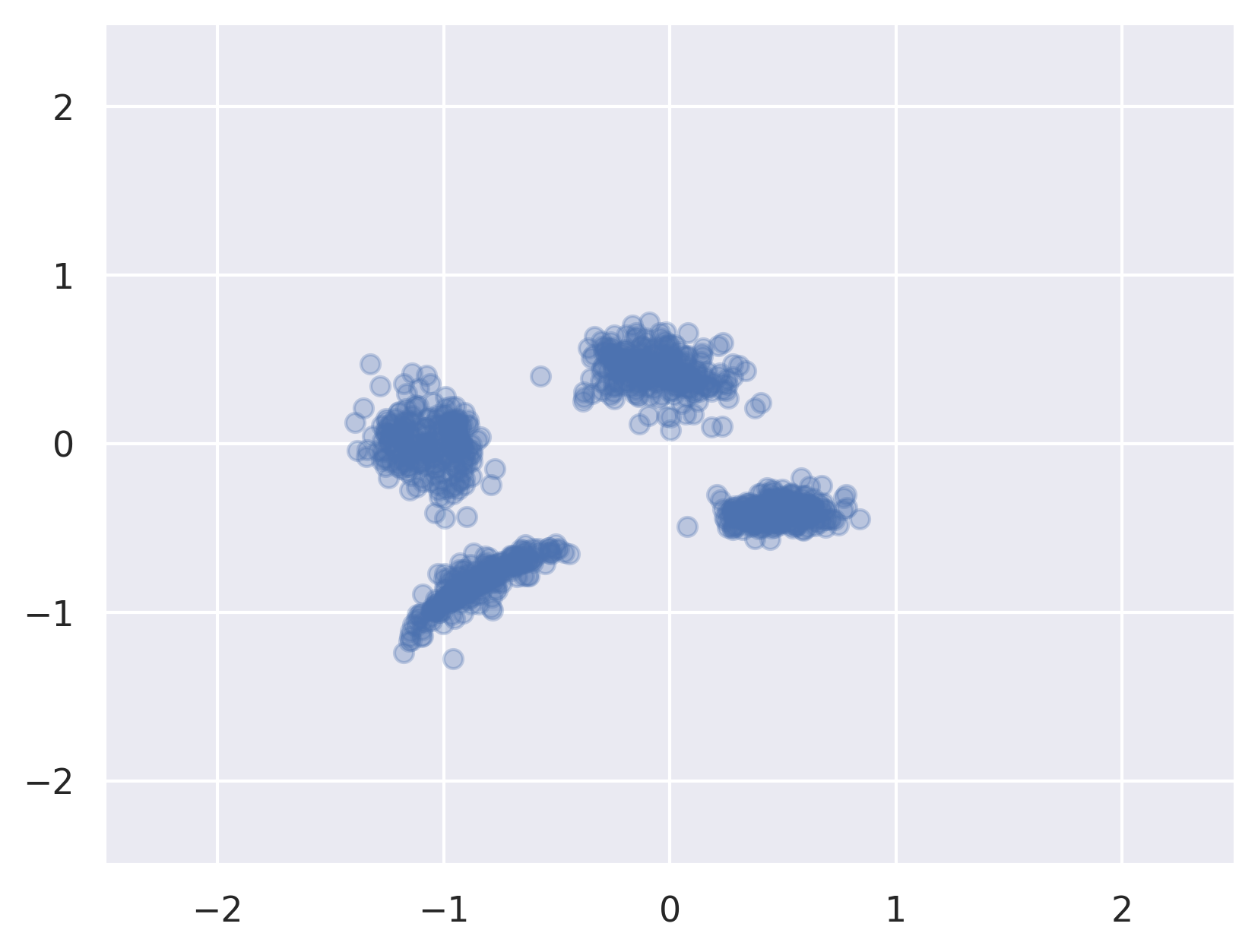}
        \label{fig:pin_rep_a1_1}
    }
    \caption{\method works for reconstructions with clustering representations. (a) and (b) show reconstructions (yellow) of the pinwheel data (red) and the learned representations (blue) for \method and \method with $\text{KL}(p(z)||q(z))$, respectively.
    \method with $\text{KL}(p(z)||q(z))$ demonstrates good reconstructions and representations for clustered pinweels, but \method without it shows poor results.}
    \label{fig:clustering}
\end{figure}

\section{Conclusion}

This paper studied how to learn variational autoencoders with various divergence under differential privacy constraints.
We revealed several divergences increase the sensitivity of the stochastic gradient from $O(1)$ to $O(B)$ in terms of batch size $B$.
To reduce the sensitivity and the amount of noise, we proposed a term-wise DP-SGD that crafted randomized gradients in two different ways tailored to the compositions of the loss terms.
The term-wise DP-SGD could keep the sensitivity at $O(1)$ even when attaching the divergence.
In our experiments, we demonstrated that our method worked well with two pairs of the prior distribution and the divergence.
We mainly studied differentially private VAEs, but these contributions also have significant importance for the other machine learning models required to satisfy differential privacy.

\bibliography{ref}
\bibliographystyle{abbrv}

\end{document}